\theoremstyle{thmstyleone}%
\newtheorem{theorem}{Theorem}
\newtheorem{proposition}[theorem]{Proposition}%
\theoremstyle{thmstyletwo}%
\theoremstyle{thmstylethree}%
\algrenewcommand\algorithmicrequire{\textbf{Input:}\phantom{sa}}
\algrenewcommand\algorithmicensure{\textbf{Output:}}
\newcommand{\TS}{time-series\xspace}
\newcommand{\ourmethod}{\textbf{AALF}\xspace}
\newcommand{\fint}{f}
\newcommand{\fcomp}{g}
\newcommand{\abs}[1]{\lVert #1 \rVert}
\newcommand{\bxt}{\bm{x}_t}
\newcommand{\X}[1]{x^{(#1)}}
\newcommand{\bX}[1]{\bm{x}^{(#1)}}
\DeclareMathOperator*{\argmin}{argmin}
\definecolor{color1}{HTML}{377eb8}
\definecolor{color2}{HTML}{E41A1C}
\definecolor{color3}{HTML}{4daf4a}
\definecolor{color4}{HTML}{ff7f00}
\begin{document}

\title[AALF]{AALF: Almost Always Linear Forecasting}


\author*[1,2]{\fnm{Matthias} \sur{Jakobs}}\email{matthias.jakobs@tu-dortmund.de}

\author[1,2]{\fnm{Thomas} \sur{Liebig}}\email{thomas.liebig@tu-dortmund.de}


\affil[1]{\orgname{Lamarr Institute for Machine Learning and Artificial Intelligence}, \orgaddress{\city{Dortmund}, \country{Germany}}}
\affil[2]{\orgname{TU Dortmund University}, \orgaddress{\city{Dortmund}, \country{Germany}}}



\abstract{Recent works for \TS forecasting more and more leverage the high predictive power of Deep Learning models.
With this increase in model complexity, however, comes a lack in understanding of the underlying model decision process, which is problematic for high-stakes application scenarios.
At the same time, simple, interpretable forecasting methods such as ARIMA still perform very well, sometimes on-par, with Deep Learning approaches.
We argue that simple models are good enough most of the time, and that forecasting performance could be improved by choosing a Deep Learning method only for few, important predictions, increasing the overall interpretability of the forecasting process.
In this context, we propose a novel online model selection framework which learns to identify these predictions. An extensive empirical study on various real-world datasets shows that our selection methodology performs comparable to state-of-the-art online model selections methods in most cases while being significantly more interpretable.
We find that almost always choosing a simple autoregressive linear model for forecasting results in competitive performance, suggesting that the need for opaque black-box models in \TS forecasting might be smaller than recent works would suggest.}

\keywords{Time Series Forecasting, Model Selection, Interpretability}



\maketitle

\section{Introduction}
Time-series forecasting has been recognized as a vital component in informed decision-making in many different real world application settings, such as predictive maintenance, energy management, financial investments and smart city transport planing \cite{godahewaMonashTimeSeries2021,hyndmanStateSpaceFramework2002,saadallahBRIGHTDriftAwareDemand2020}.
Due to the complex and dynamic characteristics of \TS, forecasting is considered one of the most challenging parts in \TS data analysis.
Numerous Machine Learning approaches have been proposed to tackle the forecasting problem.
First, many works tackle forecasting in an online or streaming fashion \cite{hyndmanLargescaleUnusualTime2015}, utilizing previously observed values (and predictions) as features.
Others create feature spaces by considering the $L$ last known values, referred to as the lagged values, as fixed-length embeddings used for predicting future values \cite{jakobsExplainableAdaptiveTreebased2023,saadallahExplainableOnlineDeep2021,saadallahExplainableOnlineEnsemble2022,cerqueiraArbitratedEnsembleTime2017,saadallahOnlineDeepHybrid2023}.
However, as a direct consequence of the No Free Lunch theorem \cite{wolpertNoFreeLunch1997}, we know that no model can be universally applied to all application settings all the time.
Especially for \TS data whose characteristics might change over time we know that models exhibit varying performance during the forecasting task due to concept drifts \cite{jakobsExplainableAdaptiveTreebased2023,saadallahExplainableOnlineDeep2021,saadallahExplainableOnlineEnsemble2022,saadallahOnlineDeepHybrid2023}.
One way to tackle this problem is to utilize online model selection or ensembling methods to try to find the best models for each time step during forecasting.

In recent years, researchers have become more conscious of the downsides that come with applying Deep Learning methods in practice, namely poor understanding of why these models perform in the way they do.
Especially in safety-critical application settings, it is important to understand the model behavior to be able to audit and verify that the predictions are not made due to spurious correlations and in a fair matter \cite{guidottiSurveyMethodsExplaining2018}.
In the literature, these aspects are often discussed under the terms \emph{Interpretability} and \emph{Explainability}.
We use these two terms based on the definitions of Rudin \cite{rudinStopExplainingBlack2019}: \emph{Interpretability} refers to the model itself being transparent and small enough so that researchers and domain experts can understand their inner workings completely, whereas \emph{Explainability} refers to methods auditing black-box models by generating explanations of why a model predicted specific datapoints the way it did after the fact.
We argue, as others \cite{rudinStopExplainingBlack2019}, that interpretable methods are superior in comparison to explaining uninterpretable methods post-hoc, since researchers are able to understand and verify them more easily and with greater confidence.

In practice, there is an often observed trade-off between interpretability and predictive performance: The more interpretable a model is the smaller its predictive performance tends to be.
However, in \TS forecasting, this conventional wisdom is often found not to be the case.
For example, in the M4 forecasting competition \cite{makridakisM4CompetitionResults2018}, the organizers found that traditional, simple and interpretable methods such as ARIMA and Exponential Smoothing performed surprisingly well, and that by combining interpretable methods with high performing Deep Learning methods, an overall increase in predictive performance can be observed \cite{smylHybridMethodExponential2020}.

This leads us to the hypothesis that we can increase interpretability while maintaining high predictive performance in \TS forecasting by employing model selection to choose an interpretable model whenever possible, and only resort back to Deep Learning methods if their predictions are superior enough to warrant the decrease in interpretability.
Thus, in this work, we propose the following:

\begin{itemize}
    \item We propose a novel model selection strategy called AALF: Almost Always Linear Forecasting (see \cref{fig:overview}). \ourmethod defines (on historic data) the optimal selection given a interpretability constraint and aims to learn a classifier from these labels. This classifier then selects between an interpretable model (in our case, a simple autoregressive model) and a Deep Learning black-box model for each time step of a forecasting process, preferring the interpretable model whenever possible.
    \item We provide a comparative empirical study with other state-of-the-art model selection algorithms on over $3500$ \TS and discuss the implications in terms of predictive performance and interpretability. We find that often very little help by the black-box model is needed to boost the overall forecasting performance of the autoregressive model significantly.
    \item We show that, by varying the desired amount of interpretable model selection, we can outperform or perform competitively to other state-of-the-art online model selection methods while choosing the interpretable model significantly more often, improving the overall interpretability of the forecasting process.
    \item Our proposed method is a generic framework which is easily adaptable and can be extended to be used in different application scenarios. While we showcase the validity of our method with a autoregressive model and common Deep Learning models we want to stress that we do not make assumptions about the specific model families.
\end{itemize}

\begin{figure*}
    \centering
%
%
%

\usetikzlibrary{patterns,decorations.pathreplacing}
\begin{tikzpicture}

\def\cellwidth{0.5cm}
\def\cellheight{0.5cm}

\foreach \row in {1, 2} {
	\node[left] at (\cellwidth, -\row * \cellheight - 0.5*\cellheight) {$\bm{x}_{\row}$};
	\foreach \col in {1, 2, 3, 4, 5} {
		\draw[thick] (\col*\cellwidth, -\row*\cellheight) rectangle
		(\col*\cellwidth + \cellwidth, -\row*\cellheight - \cellheight);
	}
}

\node[left] at (0, -3*\cellheight - \cellheight/2) {};
\foreach \col in {1, 2, 3, 4, 5} {
	\node at (\col*\cellwidth + 0.5*\cellwidth, -3*\cellheight - 0.5*\cellheight) {$\dots$};
}

\node[left] at (\cellwidth, -4*\cellheight - \cellheight/2) {$\bm{x}_T$};
\foreach \col in {1, 2, 3, 4, 5} {
	\draw[thick] (\col*\cellwidth, -4*\cellheight) rectangle
	(\col*\cellwidth + \cellwidth, -4*\cellheight - \cellheight);
}

\def\yoffset{1.5cm}
\foreach \col in {1, 2} {
	\draw[thick] (\col*\cellwidth, -4*\cellheight - \yoffset) rectangle
	(\col*\cellwidth + \cellwidth, -4*\cellheight - \cellheight - \yoffset);
	\node[above] at ((\col*\cellwidth + 0.5*\cellwidth, -4*\cellheight  -\yoffset) {$y_{\col}$};
}
\node[right] at (3.35*\cellwidth, -4.5*\cellheight  -\yoffset) {$\dots$};
\draw[thick] (5*\cellwidth, -4*\cellheight - \yoffset) rectangle (5*\cellwidth + \cellwidth, -4*\cellheight - \cellheight - \yoffset);
\node[above] at (5*\cellwidth + 0.5*\cellwidth, -4*\cellheight  -\yoffset) {$y_T$};

\def\braceXOffset{5}
\draw [thick, decorate, decoration = {brace}] (6*\cellwidth + \braceXOffset,-\cellwidth) --  (6*\cellwidth+\braceXOffset+1,-5*\cellwidth);

\draw[->] (6.75*\cellwidth, -3*\cellheight) -- (8*\cellwidth, -3*\cellheight);
\draw[->]  (9.5*\cellwidth, -3*\cellheight) -- (10.5*\cellwidth, -3*\cellheight);
\draw[->] (6.5*\cellwidth + 0.5*\cellwidth, -3*\cellheight) -- (6.5*\cellwidth + 0.5*\cellwidth, -4.5*\cellheight) -- (8*\cellwidth, -4.5*\cellheight);
\draw[->]  (9.5*\cellwidth, -4.5*\cellheight) --  (10.5*\cellwidth, -4.5*\cellheight);
\draw[->] (6.5*\cellwidth + 0.5*\cellwidth, -3*\cellheight) -- (6.5*\cellwidth + 0.5*\cellwidth, -1.5*\cellheight) -- (10*\cellwidth, -1.5*\cellheight);

\def\fex{10.5*\cellwidth};
\def\fey{-1*\cellheight};
\def\fewidth{6*\cellwidth};
\def\feheight{-1*\cellheight};
\draw[thick, align=center] (\fex, \fey) rectangle ++(\fewidth, \feheight) node[pos=0.5] {Feature extraction};
\draw[->] (\fex+\fewidth + 0.5*\cellwidth, \fey - 0.5*\cellwidth) -- (\fex + 11.5*\cellwidth, \fey - 0.5*\cellwidth);

\def\circleradius{0.25cm};
\draw[thick] (\fex - 1.75*\cellwidth, \fey - 2.5*\cellheight + 0.5*\cellheight) circle (\circleradius) node[] {$\fint$};
\draw[thick] (\fex - 1.75*\cellwidth, \fey - 4*\cellheight + 0.5*\cellheight) circle (\circleradius) node[] {$\fcomp$};

\foreach \col in {0, 1, 2, 3, 4} {
	\draw[thick, pattern=crosshatch] (\fex + \col*\cellwidth + 0.5*\cellwidth, \fey -2.5*\cellheight) rectangle ++(\cellwidth, \cellheight);
	\draw[thick, pattern=crosshatch dots] (\fex + \col*\cellwidth+ 0.5*\cellwidth, \fey -4*\cellheight) rectangle ++(\cellwidth, \cellheight);
}
\draw[->] (\fex + 5.75*\cellwidth, \fey - 2 * \cellheight) -- (\fex + 11.5*\cellwidth, \fey - 2 * \cellheight);
\draw[->] (\fex + 5.75*\cellwidth, \fey - 3.5 * \cellheight) -- (\fex + 11.5*\cellwidth, \fey - 3.5 * \cellheight);

\draw[thick, align=center] (\fex + 5.5*\cellwidth, \fey - 7*\cellheight - 0.5*\cellheight) rectangle ++(4.5*\cellwidth, 2*\cellheight) node[pos=0.5, text width=3*\cellwidth] {{Optimal} {Selection}};
\draw[->] (6.5*\cellwidth, -7.5*\cellheight) -- (15.5*\cellwidth, -7.5*\cellheight);
\draw[->] (\fex + 8.5*\cellwidth, \fey - 2 * \cellheight) -- (\fex + 8.5*\cellwidth, \fey - 5 * \cellheight);
\draw[->] (\fex + 7.5*\cellwidth, \fey - 3.5 * \cellheight) -- (\fex + 7.5*\cellwidth, \fey - 5 * \cellheight);
\draw[fill] (\fex + 8.5*\cellwidth, \fey - 2 * \cellheight) circle (0.1*\cellwidth);
\draw[fill]  (\fex + 7.5*\cellwidth, \fey - 3.5 * \cellheight)	 circle (0.1*\cellwidth);

\foreach \col in {1, 2} {
	\draw[thick] (\fex + 5.5*\cellwidth + \col*\cellwidth - \cellwidth, \fey - 10*\cellheight) rectangle ++(\cellwidth, \cellheight) node[pos=0.5] {$s_{\col}$};
}
\node[right] at  (\fex + 5.5*\cellwidth + 2.15*\cellwidth, \fey - 9.5*\cellheight) {$\dots$};
\draw[thick] (\fex + 6*\cellwidth + 3*\cellwidth, \fey - 10*\cellheight) rectangle ++(\cellwidth, \cellheight) node[pos=0.5] {$s_{T}$};
\draw[->] (\fex + 7.75*\cellwidth, \fey - 7.75*\cellheight) -- (\fex + 7.75*\cellwidth, \fey - 8.25*\cellheight);
\draw[->] (\fex + 10.75*\cellwidth, \fey - 9.5*\cellheight) -- (\fex + 11.5*\cellwidth, \fey - 9.5*\cellheight);

\draw[dashed] (\fex + 5*\cellwidth, \fey - 10.5*\cellheight) rectangle ++(5.5*\cellwidth, 2*\cellheight);
\node at (\fex + 4*\cellwidth, \fey - 9.5*\cellheight) {$\bm{s}^*$};

\def\cx{\fex + 12*\cellwidth};
\def\cy{-\cellheight};
\def\cwidth{3.5*\cellwidth};
\def\cheight{-10.5*\cellheight};
\draw[thick] (\cx, \cy) rectangle ++(\cwidth, \cheight) node[pos=0.5] {Classifier};

\draw[text width=0.75*\cwidth, align=center, dashed] (\cx+0.125*\cwidth, \cy+0.66*\cheight) rectangle ++(0.75*\cwidth, 0.3*\cheight) node[pos=0.5] {Balance Labels};

\end{tikzpicture}
    \caption{Schematic overview of our proposed method \ourmethod: Given a set of windowed \TS $\{\bm{x}_1, \dots, \bm{x}_T \}$ with the corresponding forecasts $\{ y_1, \dots, y_T \}$ we compute both models predictions $\fint(\bxt)$ and $\fcomp(\bxt)$, as well as some additional features. The predictions, along with the label, are used to predict the optimal model selection $\bm{s}^*$, which we use as labels to estimate a classifier. }
    \label{fig:overview}
\end{figure*}

\section{Related Work}

Multiple approaches for online model selection have been proposed in recent years.
Online model selection refers to choosing a model (or  form an ensemble) from a pool of pretrained models for each time point to forecast.
These strategies are often based on meta-learning where a weighting between all pretrained models is computed, and the ensemble or single model is chosen based on these weights.
Some of these methods are model-agnostic, meaning the selection and ensemble strategies do not assume a specific family of pretrained models \cite{cerqueiraArbitratedEnsembleTime2017,cerqueiraDynamicHeterogeneousEnsembles2017,saadallahOnlineExplainableModel2023,priebeDynamicModelSelection2019}.
Most of them, however, focus on maximizing predictive performance of the selection, ignoring other aspects such as interpretability of the results.
In \cite{saadallahOnlineExplainableModel2023,priebeDynamicModelSelection2019} explainability aspects are discussed since the model selection paradigm is based on the \emph{Region of Competence (RoC)} concept.
A Region of Competence of a model is defined as a subset of data on which the model performed superior to all other models in the pool.
During inference, a new input window to forecast is compared to each model's RoC and the closest model in terms of some distance measure is chosen to forecast.
In \cite{saadallahExplainableOnlineDeep2021,saadallahExplainableOnlineEnsemble2022,jakobsExplainableAdaptiveTreebased2023}, the authors propose model-specific online selection and ensembling methods with a focus on Explainability.
First, Regions of Competence are computed for each model.
Then, these Regions are refined using model-specific Explainability methods, namely GradCAM \cite{selvarajuGradCAMVisualExplanations2019} and TreeSHAP \cite{lundbergLocalExplanationsGlobal2020}.
In addition, \cite{jakobsExplainableAdaptiveTreebased2023} utilize pools of tree-based models to select from, increasing the interpretability of the overall selection approach because tree-based model predictions can be interpretable.
While these methods can give an explanation to why a model was chosen (based on the closest observed input in the Regions of Competence), the inner workings of the base models remain too complex for investigation, sometimes even if tree-based models are used.
This is why we argue that (a pool of) interpretable models could increase trust and the number of possible application scenarios for online model selection.

Another line of work considers choosing a single model to forecast an entire time series instead of the previously mentioned online approach
\cite{zhangForecastingAgriculturalCommodity2020,prudencioMetalearningApproachesSelecting2004,talagalaMetalearningHowForecast2023}.
These works also utilize meta-learning strategies, where classifier or weighting schemes are fitted on meta-features of the data and past performances for each dataset.
However, we do not consider these approaches comparable to our methodology since we are interested in online selection to maximize both interpretability as well as predictive performance.

In Hybrid Model Selection \cite{hajirahimiHybridStructuresTime2019}, many approaches first apply a linear model to the \TS, followed by a non-linear model on the residuals.
For example, in \cite{zhangTimeSeriesForecasting2003} the authors first predict with ARIMA, followed by a simple Neural Network on the (assumed to be) non-linear residuals.
In \cite{smylHybridMethodExponential2020}, the authors combine Exponential Smoothing with LSTM Neural Networks to improve prediction quality, winning the M4 Competition in the process.
However, the forecasting process on the residuals is still done by opaque black-box methods, which makes them hard to deploy in safety-critical applications.

Some works do consider other factors to be important besides predictive error.
In \cite{brehler2023combiningDecisionTreeCNN,dagheroTwostageHumanActivity2022,buschjagerRejectionEnsemblesOnline2024}, the authors are concerned with reducing energy consumption by considering smaller models (in their case, Decision Trees) that are quick and energy efficient to evaluate over larger methods (in their case, Convolutional Neural Networks), that need to be run on special, high power hardware.
If a Decision Tree predicts a class on which it is deemed to be not good enough the Convolutional Neural Network is queried instead, increasing the energy consumption.
However, since lowering energy consumption is a focus of their work the models are only evaluated if they are chosen, whereas we also incorporate all model predictions into the model selection process.

\section{Methodology}
\label{sec:methodology}

Let $\bm{X} \in \mathbb{R}^{T}$ be a univariate \TS  of length $T$.
We denote with $\X{t}$ the value of $\bm{X}$ at time $t$.
The goal of forecasting is then to predict the value $\X{t+H}$ from the last $L$ known lagged values $(\X{t-L+1}, \X{t-L+2}, \dots, \X{t})$.
For ease of notation we will define the use of intervals in the superscript, i.e., $\bX{t-L+1:t} := (\X{t-L+1}, \X{t-L+2}, \dots, \X{t})$.

Let $\fint, \fcomp$ be two trained forecasters, mapping from $\mathbb{R}^{L}$ to $\mathbb{R}$ and let $[T] := \{1, \dots, T \}$.
We are interested in predicting, for each $t \in [T]$, which forecaster to choose to perform the prediction.
We are interested in two goals for our selection strategy:
\begin{enumerate}
    \item Achieving the lowest average prediction error over all $t \in [T]$
    \item Choosing one of the forecasters (in our case $\fint$) as often as possible.
\end{enumerate}
Let $\bm{x}_t := \X{t-L+1:t}$ and $y_t := \X{t+1}$.
Then, the overall observed squared error if we allow to choose between $\fint$ and $\fcomp$ for each $t \in [T]$ is given by
\begin{align*}
    \mathcal{L}(\bm{s}) = \sum_{t=1}^T \left(y_t-s_t \fint(\bm{x}_t) - (1-s_t)\fcomp(\bm{x}_t)\right)^2, \quad s_t \in \{0, 1 \}
\end{align*}

where $\bm{s} = (s_1, \dots, s_T)^\intercal$.
In order to choose $\fint$ more often than $\fcomp$ we introduce a constraint to select $\fint$ at least $B$ times out of $T$.
This leads to the following constraint optimization problem:

\begin{align}
\label{eq:optim}
\bm{s}^* &= \argmin_{\bm{s}} ~ \mathcal{L}(\bm{s}) \quad \textrm{s.t.} \quad \bm{s} \in \{0, 1\}^T \wedge \lVert \bm{s} \rVert \geq B
\end{align}

\begin{proposition}
    \label{prop:optimal_solution}
    The optimal solution to \cref{eq:optim} is the vector $\bm{s}^* = (s_1^*, \dots, s_T^*)$ with elements
    $$
    s^*_t = \begin{cases}
        1 &\text{if } \ell(t) \leq \max(0, \ell(\pi(B))) \\
        0 &\text{otherwise}
    \end{cases}
    $$
    where $\ell(t) := (\fint(\bxt)-y_t)^2 - (\fcomp(\bxt) - y_t)^2$ and $\pi: [T] \rightarrow [T]$ is a permutation satisfying $\ell(\pi(t)) \leq \ell(\pi(t+1)) ~ \forall t \in [T-1]$.
\end{proposition}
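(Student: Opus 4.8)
The plan is to first remove the squares, reducing \cref{eq:optim} to a linear combinatorial selection problem, and then solve that problem by an exchange (swapping) argument.

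\emph{Step 1 (linearization).} Write $u_t := \fint(\bxt)$ and $v_t := \fcomp(\bxt)$. Since $s_t \in \{0,1\}$ we have $s_t^2 = s_t$, so expanding the square gives
\[
\bigl(y_t - s_t u_t - (1-s_t)v_t\bigr)^2 = \bigl((y_t - v_t) - s_t(u_t - v_t)\bigr)^2 = (y_t - v_t)^2 + s_t\bigl[(u_t - v_t)^2 - 2(y_t - v_t)(u_t - v_t)\bigr].
\]
Using $(u_t - y_t)^2 - (v_t - y_t)^2 = (u_t - v_t)(u_t + v_t - 2y_t)$ the bracket is exactly $\ell(t)$, so each summand of $\mathcal{L}$ equals $(y_t - v_t)^2 + s_t\,\ell(t)$ and therefore
\[
\mathcal{L}(\bm s) = \sum_{t=1}^T (v_t - y_t)^2 + \sum_{t=1}^T s_t\,\ell(t),
\]
where the first sum does not depend on $\bm s$. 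Hence \cref{eq:optim} is equivalent to minimizing $\Phi(S) := \sum_{t \in S} \ell(t)$ over all $S \subseteq [T]$ with $|S| \ge B$, where $S := \{t : s_t = 1\}$.

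\emph{Step 2 (the selection problem).} Assume first that the values $\ell(t)$ are pairwise distinct; the general case is addressed below. Put $c := \max\bigl(0, \ell(\pi(B))\bigr)$, so that the claimed $\bm s^*$ corresponds to $S^* := \{t : \ell(t) \le c\}$, which is feasible since $\{\pi(1), \dots, \pi(B)\} \subseteq S^*$. I would establish $\Phi(S) \ge \Phi(S^*)$ for every feasible $S$ by cases on the sign of $\ell(\pi(B))$. If $\ell(\pi(B)) \le 0$, then $c = 0$ and $\Phi(S^*) = \sum_{t:\ell(t)<0}\ell(t)$; for any $S$, discarding the nonnegative contributions to $\Phi(S)$ and then adjoining the missing negative terms can only decrease the sum, so $\Phi(S) \ge \Phi(S^*)$, and in fact $S^*$ is optimal even without the cardinality constraint. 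If $\ell(\pi(B)) > 0$, then at most $B - 1$ indices have $\ell(t) < 0$, and $S^* = \{\pi(1), \dots, \pi(B)\}$ is precisely the set of all negative-$\ell$ indices together with the $\ell$-cheapest nonnegative ones. Given feasible $S$: while $|S| > B$, pick $t_0 \in S \setminus S^*$; since $t_0 \notin \{\pi(1), \dots, \pi(B)\}$ we have $\ell(t_0) \ge \ell(\pi(B)) > 0$, so deleting $t_0$ strictly decreases $\Phi$ and keeps $|S| \ge B$. Once $|S| = B$: while $S \ne S^*$, pick $t_0 \in S \setminus S^*$ and $t_1 \in S^* \setminus S$; then $\ell(t_1) \le \ell(\pi(B)) \le \ell(t_0)$, so replacing $t_0$ by $t_1$ does not increase $\Phi$ and strictly enlarges $S \cap S^*$. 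Iterating terminates at $S = S^*$, which yields $\Phi(S) \ge \Phi(S^*)$ and hence $\mathcal{L}(\bm s^*) \le \mathcal{L}(\bm s)$.

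\emph{Expected main obstacle.} Step 1 is routine algebra; the care lies in Step 2, and in particular in removing the distinctness assumption. If $\ell(\pi(B)) > 0$ and several indices attain the threshold value $c$, then $\{t : \ell(t) \le c\}$ has more than $B$ elements while only $B$ of the tied ones ought to be selected; one then has to let $\pi$ double as a tie-breaker and read the statement as ``$s^*_t = 1$ iff $\ell(t) < c$, or $\ell(t) = c$ and $\pi^{-1}(t) \le B$'', i.e., select exactly $\pi(1), \dots, \pi(B)$ together with every index with $\ell(t) < 0$. With $S^*$ understood this way the swap argument above applies verbatim, since every $t_0 \in S \setminus S^*$ still satisfies $\ell(t_0) \ge \ell(\pi(B))$ and every $t_1 \in S^* \setminus S$ still satisfies $\ell(t_1) \le \ell(\pi(B))$. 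A minor related point, dispatched the same way, is that when $\ell(\pi(B)) = 0$ the set $S^*$ may include indices beyond the required $B$, but these carry $\ell = 0$ and leave $\Phi$ unchanged.
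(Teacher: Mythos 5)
Your proof takes essentially the same route as the paper's: the identical linearization using $s_t^2 = s_t$ to rewrite the objective as a constant plus $\sum_{t} s_t\,\ell(t)$, followed by solving the resulting cardinality-constrained selection problem. Where the paper merely asserts the solution of that selection problem in two sentences, your exchange argument makes the step rigorous, and in doing so you correctly expose an edge case the paper overlooks: if several indices are tied at a strictly positive threshold $\ell(\pi(B))$, the rule ``$s^*_t = 1$ iff $\ell(t) \leq \max(0, \ell(\pi(B)))$'' selects all of them and is then strictly suboptimal, so the proposition is only literally correct under your tie-breaking reading (or under an assumption of distinct $\ell$-values).
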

\begin{proof}
We start by reformulating the objective $\mathcal{L}(\bm{s})$ as
\begin{align*}
    \mathcal{L}(\bm{s})
    &= \sum_{t=1}^T (\underbrace{y_t - \fcomp(\bm{x}_t)}_{=:c_t} + s_t(\underbrace{\fcomp(\bm{x}_t) - \fint(\bm{x}_t))}_{=: \delta_t})^2 \nonumber \\
    &= \sum_{t=1}^T c_t^2 +2c_ts_t\delta_t + s_t^2\delta_t^2 \nonumber \\
    &= \left( \sum_{t=1}^T (y_t - \fcomp(\bxt))^2 \right) + \left( \sum_{t=1}^T s_t (2c_t\delta_t + \delta_t^2) \right)
\end{align*}

Notice that each summand of the second sum is the difference of squared errors between $\fint$ and $\fcomp$ at time $t$:
\begin{align*}
2c_t \delta_t + \delta_t^2 &= 2(y_t - \fcomp(\bxt))(\fcomp(\bxt) - \fint(\bxt)) + (\fcomp(\bxt) - \fint(\bxt))^2 \\
&= \fint(\bxt)^2 -2\fint(\bxt)y_t + y_t^2 - (y_t^2 - 2\fcomp(\bxt)y_t + \fcomp(\bxt)^2)  \\
&= (\fint(\bxt) -y_t)^2 - (\fcomp(\bxt) - y_t)^2  \\
&= \ell(t)  \\
\end{align*}

Since the first sum in $\mathcal{L}(\bm{s})$ is independent of $\bm{s}$ the optimization problem is equivalent to
\begin{align}
\label{eq:final_optimization}
\bm{s}^* &= \argmin_{\bm{s}} \sum_{t=1}^T s_t\ell(t) \quad \textrm{s.t.} \quad \bm{s} \in \{ 0, 1 \}^T \wedge \abs{\bm{s}} \geq B.
\end{align}

With this reformulation it is easy to see that the optimal solution (ignoring the constraint $\abs{\bm{s}} \geq B$) is given by choosing all indices $t$ for which $\ell(t) \leq 0$, i.e.,
$$
s^*_t = \begin{cases} 1 &\text{if } \ell(t) \leq 0 \\ 0 &\text{otherwise} \end{cases}
$$

In fact, if $\abs{\bm{s}^*} \geq B$ we have found a constraint satisfying solution (and included as many predictions of $\fint$ as possible) and are done.
Otherwise, we know that we already chose $t \in \{ \pi(1), \dots, \pi(N) \}$ with $N = \abs{\bm{s}^*}$ and we know that $N < B$.
This means that by choosing all $t$ with $\ell(t) \leq \ell(\pi(B))$ we include the negative error differences from $\bm{s}^*$ as well as the necessary smallest differences to satisfy the constraint.
Thus, we can rewrite the optimal solution to
$$
s^*_t = \begin{cases}
    1 &\text{if } \ell(t) \leq \max(0, \ell(\pi(B))) \\
    0 &\text{otherwise}
\end{cases}
$$
\end{proof}

\begin{figure}
    \centering
    \includegraphics{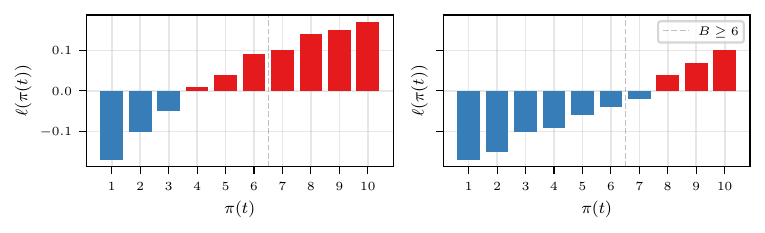}
    \caption{Illustrative example of how many (and which) entries to choose to get the optimal, constraint-satisfying solution. On the left we have to choose the indices $\pi(4), \pi(5)$ and $\pi(6)$ even though their corresponding loss difference $\ell(t)$ is positive to satisfy the constraint $B \geq 6$. On the right side we see that after choosing the first $6$ entries we can continue and further decrease the loss, choosing $7$ entries in total.}
    \label{fig:optimum_example}
\end{figure}

Intuitively, \cref{prop:optimal_solution} states that the optimal solution is given by choosing $\fint$ whenever it outperforms $\fcomp$.
In addition, we (might) need to choose $\fint$ even if the total error  increases until the constraint is satisfied.
Both scenarios are visualized with an example in \cref{fig:optimum_example}.
For both examples we have $B\geq6$, i.e., we have to choose $\fint$ at least $6$ times.
On the left side the optimal solution is given by choosing $B$ exactly $6$ times since $\ell(\pi(t)) > 0$ already for $t > 3$.
On the right side however we choose $\fint$ a total of $7$ times to get the optimal solution.

While the framework is general and can be adapted to many use-cases we want to define the constraint $B$ as an \textit{interpretability constraint} by choosing for $\fint$ some small, interpretable model and for $\fcomp$ some high-performance black-box model.
That way, we can see the problem of finding $\bm{s}^*$ as finding the best possible loss when selecting between $\fint$ and $\fcomp$ while being as interpretable as possible given a user-specific $B$.

However, we obviously do not have access to all necessary information (specifically to $y_t$) in practice to compute the optimal selection on unseen data.
Thus, we propose to interpret each $s_t$ as the output of a binary classifier.
To train this classifier, we can use data not used to train $\fint$ and $\fcomp$ and for which all necessary information is known to compute $\bm{s}^*$ and use this as the label.
The process is also visualized schematically in \cref{fig:overview}.

\section{Experiments}
In this section, we aim to answer the following research questions:

\begin{itemize}
    \item \textbf{Q1:} What is the impact of large interpretability constraints $B$ on predictive performance?
    \item \textbf{Q2:} How accurate can we estimate the true optimal selection for different $B$?
    \item \textbf{Q3:} How does our method perform in terms of prediction error and interpretable model selection against state-of-the-art online model selection methods?
\end{itemize}

\subsection{Experimental Setup}
\label{sec:experimentalsetup}
We utilize $6$ univariate \TS datasets of different application domains and with varying characteristics for our experiments (see \cref{tab:datasets} for a overview of the datasets and their characteristics).
These datasets are provided by the Monash Forecasting Repository \cite{godahewaMonashTimeSeries2021}.
We process each \TS by using the first $80\%$ as training data, the following $10\%$ as validation data and the final $10\%$ as test data.
To only keep meaningful \TS we discard series where all timesteps in the training, validation or test data are constant.
Since the \TS frequency of the datasets vary we chose to also vary the amount of lagged values accordingly.
We chose to use one day ($L=48$) for the $30$ minute datasets, $14$ days for daily and $24$ hours for the hourly datasets.
We set $H=1$ for all experiments.
The code to reproduce all of the following experiments is publicly available on our Github\footnote{\url{https://github.com/MatthiasJakobs/aalf/tree/dami}}.

\begin{table}
    \centering
    \setlength\tabcolsep{0pt}
\begin{tabular*}{\linewidth}{@{\extracolsep{\fill}} llllll}
\toprule
Name & Datapoints & Length (min/max/avg) & Freq. & $L$ & Type \\
\midrule
Aus. Electricity Demand & 5 & 230,734/232,271/231,051.20 & 30m & 48 & Energy\\
NN5 (Daily) & 111 & 790/790/790.00 & 1d & 14 & Banking\\
Weather & 2997 & 1,331/59,040/14,263.00 & 1d & 14 & Nature\\
Pedestrian Counts & 66 & 574/96,423/47,458.39 & 1h & 24 & Transport\\
KDD Cup & 255 & 9,503/10,920/10,895.24 & 1h & 24 & Nature\\
Solar & 137 & 8,760/8,760/8,760.0 & 1h & 24 & Energy\\
\bottomrule
\end{tabular*}
    \caption{Properties of the used datasets. All datasets are provided by the Monash Forecasting Repository \cite{godahewaMonashTimeSeries2021}. We resampled the Solar dataset to a frequency of one hour from 10 minutes to decrease computational demand.}
    \label{tab:datasets}
\end{table}

\subsection{Training the base models}
In order to investigate the impact of our model selection method on predictive performance and interpretability we have to train both $\fint$ and $\fcomp$.
To make $\fint$ as simple and interpretable as possible we opted for a $\text{AR}$ autoregressive model \cite{hyndmanForecastingPrinciplesPractices2021}.
$\text{AR}$ is a linear model given by
$$
\text{AR}(\bX{t-L:t-1}) = \bm{\phi}^T\bX{t-L:t-1} + \varepsilon_t, \quad \varepsilon_t \sim N(0,1)
$$
and also (as part of the ARIMA model) a very competitive baseline, even for current \TS forecasting methods.
Moreover, it is a well-understood and well-studied forecasting method that is fairly interpretable.
Since $\text{AR}$ is a local model we trained one model per \TS, more specifically on the training split of each \TS.
For $\fcomp$ we initially trained three different models.
The first one is a simple multi-layer neural network with two hidden layers (FCNN) with ReLU activations.
The second model is a Convolutional Neural Network (CNN) with two layers of convolutions, followed by one prediction layer.
Model three is DeepAR \cite{salinasDeepARProbabilisticForecasting2019}, which is a competitive state-of-the-art \TS forecasting model.
All three models are \textit{global} forecasters, meaning that we trained one model per dataset.
Specifically, they were trained on the combined training of all \TS in a dataset and afterwards evaluated on the validation and test parts of each \TS individually.

\begin{table}
    \centering
    \begin{tabular}{p{2.5cm}p{0.75cm}p{0.85cm}p{0.75cm}p{0.85cm}p{0.75cm}p{0.85cm}p{0.75cm}p{0.85cm}}
\toprule
 & \multicolumn{2}{l}{AR} & \multicolumn{2}{l}{FCNN} & \multicolumn{2}{l}{DeepAR} & \multicolumn{2}{l}{CNN} \\
 & RMSE & SMAPE & RMSE & SMAPE & RMSE & SMAPE & RMSE & SMAPE \\
\midrule
Aus. Electricity Demand & 0.086 & 0.181 & \textbf{0.063} & 0.151 & 0.069 & \textbf{0.139} & 0.069 & 0.152 \\
NN5 (Daily) & 0.735 & 0.836 & 0.688 & \textbf{0.758} & 0.705 & 0.786 & \textbf{0.685} & 0.762 \\
Weather & 0.673 & 0.859 & 0.667 & 0.813 & \textbf{0.665} & 0.811 & 0.670 & \textbf{0.801} \\
Pedestrian Counts & 0.287 & 0.366 & 0.245 & 0.312 & \textbf{0.232} & \textbf{0.290} & 0.278 & 0.347 \\
KDD Cup & \textbf{0.362} & 0.498 & 0.367 & 0.492 & 0.416 & \textbf{0.489} & 0.367 & 0.493 \\
Solar & 0.220 & 0.310 & 0.202 & 0.272 & 0.250 & 0.318 & \textbf{0.193} & \textbf{0.264} \\
\bottomrule
\end{tabular}

    \caption{The averaged root mean squared error (RMSE) and symmetric mean absolute percentage error (SMAPE) values for all base models over all datasets. The average was computed over each \TS per dataset. Notice that we highlighted the model with the smallest error (per dataset and metric) in bold face.}
    \label{tab:single_results}
\end{table}
We evaluated all models on the test portion of each dataset using two metrics.
First, we compute the root mean squared error (RMSE), given by
$$
\text{RMSE}(\hat{\bm{y}}, \bm{y}^*) = \sqrt{\sum_{i=1}^N (\hat{y}^{(i)} - y^{*(i)})^2}
$$
where $\hat{\bm{y}}$ are the predicted values for some forecaster and $\bm{y}^*$ is the ground truth.
The second metric is the symmetric mean absolute percentage error (SMAPE), given by
$$
\text{SMAPE}(\hat{\bm{y}}, \bm{y}^*) = \frac{1}{N} \sum_{i=1}^N \frac{2\lvert y^{*(t)} - \hat{y}^{(t)}\rvert}{\lvert y^{*(t)}\rvert + \lvert\hat{y}^{(t)}\rvert}
$$
In order to produce one aggregated value of RMSE and SMAPE per dataset we averaged the individual values computed on the test portions of each \TS.
We present these results in \cref{tab:single_results}.
With exception of the KDD Cup dataset we find that one of the larger, global base models always outperforms the AR model, both in terms of RMSE as well as SMAPE.
For the NN5 (Daily) and Solar datasets we see that the CNN performs best in terms or RMSE error, while DeepAR performs best (also in terms of RMSE) on the Weather and Pedestrian Counts dataset.
Finally, the FCNN outperforms the other models on the Australian Electricity Demand dataset.
However, we can see that the SMAPE value distribution is not necessarily correlated with RMSE.
\begin{figure}[h]
    \centering
    \begin{tikzpicture}[
  group line/.style={semithick},
]

\begin{axis}[
  clip={false},
  grid={both},
  axis line style={draw=none},
  tick style={draw=none},
  xticklabel pos={upper},
  y dir={reverse},
  xmin={0.5},
  ymin={0.66},
  legend style={at={(0.98, 0.7)}, font=\fontsize{8}{8}\selectfont,/tikz/every odd column/.append style={column sep=.25em}},
  legend cell align={left},
  title style={yshift=\baselineskip},
  width={360},
  ytick={1,2,3,4,5,6},
  yticklabels={{Aus.\\Electricity\\Demand},{NN5\\(Daily)},{Weather},{Pedestrian\\Counts},{KDD\\Cup},{Solar}},
  xmax={4.5},
  ymax={6.66},
  height={0.9*\axisdefaultheight},
  cycle list={{color1,mark=x,very thick,mark options={scale=2}},{color3,mark=x,very thick,mark options={scale=2}},{color2,mark=x,very thick,mark options={scale=2}},{color4,mark=x,very thick,mark options={scale=2}}},
  xticklabel style={font=\fontsize{8}{8}\selectfont},
  yticklabel style={font=\fontsize{8}{8}\selectfont, align=right},
]

\addplot+[only marks] coordinates {
  (3.8, 1)
  (3.4594594594594597, 2)
  (3.2248915582248916, 3)
  (3.0757575757575757, 4)
  (2.2509803921568627, 5)
  (2.9927007299270074, 6)
};
\addlegendentry{AR}
\addplot+[only marks] coordinates {
  (1.8, 1)
  (1.8918918918918919, 2)
  (2.224557891224558, 3)
  (2.106060606060606, 4)
  (2.4823529411764707, 5)
  (1.9416058394160585, 6)
};
\addlegendentry{FCNN}
\addplot+[only marks] coordinates {
  (2.0, 1)
  (2.7387387387387387, 2)
  (1.712379045712379, 3)
  (1.606060606060606, 4)
  (2.8, 5)
  (3.9927007299270074, 6)
};
\addlegendentry{DeepAR}
\addplot+[only marks] coordinates {
  (2.4, 1)
  (1.90990990990991, 2)
  (2.8381715048381717, 3)
  (3.212121212121212, 4)
  (2.466666666666667, 5)
  (1.072992700729927, 6)
};
\addlegendentry{CNN}
\draw[group line] (axis cs:1.8,1.2832618025751072) -- ++(0pt,-3pt) -- ([yshift=-3pt]axis cs:3.8,1.2832618025751072) -- ++(0pt,3pt);
\draw[group line] (axis cs:1.8918918918918919,2.2832618025751072) -- ++(0pt,-3pt) -- ([yshift=-3pt]axis cs:1.90990990990991,2.2832618025751072) -- ++(0pt,3pt);
\draw[group line] (axis cs:3.0757575757575757,4.283261802575107) -- ++(0pt,-3pt) -- ([yshift=-3pt]axis cs:3.212121212121212,4.283261802575107) -- ++(0pt,3pt);
\draw[group line] (axis cs:2.2509803921568627,5.283261802575107) -- ++(0pt,-3pt) -- ([yshift=-3pt]axis cs:2.8,5.283261802575107) -- ++(0pt,3pt);

\end{axis}
\end{tikzpicture}
    \caption{Critical Difference diagrams for all 6 datasets (each row corresponds to one dataset). The x axis shows the average rank of each model over the \TS in each dataset. If two models are not found to have significantly different average ranks based on a Wilcoxon signed-rank test they are connected with a horizontal bar.}
    \label{fig:cdd_single_models}
\end{figure}

To investigate this difference and to gain more insight into the individual model performances we analyzed how significant the performance differences were using a Critical Difference (CD) diagram \cite{demsar2006statistical,benavoli2016should}.
A CD diagram visualizes (for multiple treatments and multiple outcomes) whether or not the treatments are significantly different from each other or not.
In our case, each model (i.e., AR, FCNN, CNN and DeepAR) is a treatment and the RMSE loss on each \TS is the outcome.
First, the average rank of each model (considerung the model error) is computed.
Then, a Friedman test is conducted to assert whether the differences between average ranks are significant at all.
If so,  we proceed by testing pairwise significance using a Wilcoxon signed-rank test.
In \cref{fig:cdd_single_models} we present the results of these tests for each dataset.
Each row corresponds to one datasets CD diagram, with the average model rank (computed over all \TS in the dataset) shown on the x axis.
If the Wilcoxon test cannot distinguish two models a horizontal bar connects them, signifying that they are not significantly different enough (with significance level $\alpha=0.05$).

After the analysis of the base models we now select one of the black-box models as $\fcomp$ for each dataset (since our methods assumes a binary selection as outlined in \cref{sec:methodology}).
Specifically, based on the results in \cref{fig:cdd_single_models} and \cref{tab:single_results} we choose DeepAR for the Weather and Pedestrian Counts datasets, the CNN for the NN5 (Daily) and Solar dataset as well as the FCNN for the Australian Electricity Demand and KDD Cup datasets.

\subsection{Impact of the interpretability constraint}
\label{sec:q1}
\begin{figure}[h]
    \centering
    \includegraphics{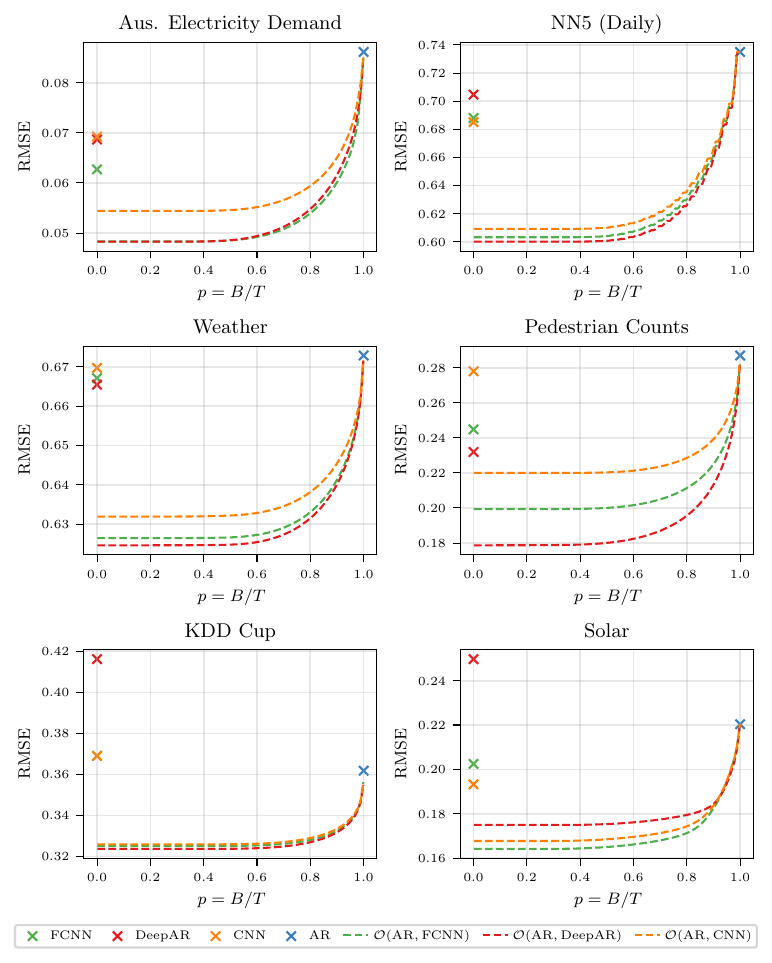}
    \caption{The optimal selection $\mathcal{O}(\fint, \fcomp)$ between pairs of models (shown as lines) versus the performance of the individual models. The y axis corresponds to average RMSE over the datasets while the x axis corresponds to how often $\fint$ is chosen over $\fcomp$.  Note that $\mathcal{O}$ is computed using ground-truth data and thus represents the best possible loss achievable for a given $p = B/T$.}
    \label{fig:loss_floor}
\end{figure}

In this section we will discuss the impact of the interpretability constraint $B$ on predictive performance.
To do that we will investigate the RMSE of combining $\fint$ and $\fcomp$ if we assume the optimal solution as given by \cref{prop:optimal_solution}.
Formally, we define $\mathcal{O}(\fint, \fcomp)$ to be the optimal selector according to \cref{prop:optimal_solution} for the models $\fint$ and $\fcomp$.
Further, since the length of the \TS in each dataset can vary drastically we define the relative interpretability constraint $p = B / T$.
In \cref{fig:loss_floor} we show (for all datasets and combinations of $\fint$ and $\fcomp$) how good the optimal selector would perform.
To do that we generated $100$ samples of $p \in (0, 1)$ and plot the RMSE (y axis) versus $p$ (x axis) for different combinations of $\fint$ and $\fcomp$.

First, notice that in order to achieve comparable loss to $\fcomp$ the optimal selector rarely needs to use $\fcomp$ more than $10$ to $15$ percent of the cases (i.e., we achieve comparable performance at $p=0.85$ to $p=0.9$).
This suggests that little help from $\fcomp$ would be needed to drastically increase the performance of $\fint$.
The notable exception to this is the KDD Cup dataset since (as mentioned earlier) $\fint$ is performing very well on this dataset.
Second, notice that for roughly $p < 0.6$ the optimal selector stops improving in terms of error.
This is due to the fact that in \cref{eq:optim} the constraint is stated for \textit{at least} $B/T$ predictions.

To summarize, our experiments suggest that (given a perfect model selection mechanism) it is possible to achieve a low RMSE error with very high interpretability constraints, resulting in an overall more interpretable \TS forecasting process. This answers research question \textbf{Q1}.

\subsection{Estimating the optimal solution}
\label{sec:q2}
\begin{table}[h]
	\centering
	\begin{tabular}{llllllll}
\toprule
 &  & $p=0.5$ & $p=0.6$ & $p=0.7$ & $p=0.8$ & $p=0.9$ & $p=0.95$ \\
\midrule
\multirow[t]{5}{*}{Aus. Electricity Demand} & $\mathtt{RND}$ & 0.501 & 0.600 & 0.700 & 0.800 & 0.900 & 0.950 \\
 & $\mathtt{LR}$ & 0.620 & 0.751 & 0.834 & 0.901 & 0.956 & 0.980 \\
 & $\mathtt{SVM}$ & 0.653 & 0.769 & 0.842 & 0.903 & 0.954 & 0.978 \\
 & $\mathtt{RF}$ & \underline{0.700} & \underline{0.799} & \underline{0.867} & \underline{0.920} & \underline{0.965} & \textbf{0.984} \\
 & $\mathtt{RFu}$ & \textbf{0.719} & \textbf{0.804} & \textbf{0.868} & \textbf{0.920} & \textbf{0.965} & \underline{0.983} \\
\midrule
\multirow[t]{5}{*}{NN5 (Daily)} & $\mathtt{RND}$ & 0.516 & 0.599 & 0.708 & 0.801 & 0.907 & 0.954 \\
 & $\mathtt{LR}$ & 0.578 & 0.684 & 0.795 & 0.869 & 0.943 & 0.973 \\
 & $\mathtt{SVM}$ & \underline{0.587} & \textbf{0.734} & \textbf{0.828} & \textbf{0.889} & \textbf{0.952} & 0.976 \\
 & $\mathtt{RF}$ & 0.585 & 0.716 & \underline{0.819} & \underline{0.884} & \underline{0.951} & \underline{0.976} \\
 & $\mathtt{RFu}$ & \textbf{0.637} & \underline{0.724} & 0.812 & 0.877 & 0.949 & \textbf{0.976} \\
\midrule
\multirow[t]{5}{*}{Weather} & $\mathtt{RND}$ & 0.511 & 0.602 & 0.701 & 0.801 & 0.901 & 0.951 \\
 & $\mathtt{LR}$ & 0.619 & 0.757 & 0.825 & 0.890 & 0.947 & 0.974 \\
 & $\mathtt{SVM}$ & 0.595 & 0.745 & 0.824 & 0.890 & \underline{0.948} & \underline{0.974} \\
 & $\mathtt{RF}$ & \underline{0.651} & \underline{0.759} & \textbf{0.837} & \textbf{0.898} & \textbf{0.950} & \textbf{0.974} \\
 & $\mathtt{RFu}$ & \textbf{0.655} & \textbf{0.768} & \underline{0.833} & \underline{0.892} & 0.946 & 0.972 \\
\midrule
\multirow[t]{5}{*}{Pedestrian Counts} & $\mathtt{RND}$ & 0.501 & 0.600 & 0.700 & 0.800 & 0.900 & 0.950 \\
 & $\mathtt{LR}$ & 0.636 & 0.757 & 0.834 & 0.898 & 0.952 & 0.976 \\
 & $\mathtt{SVM}$ & 0.657 & 0.769 & 0.849 & 0.908 & 0.957 & 0.979 \\
 & $\mathtt{RF}$ & \textbf{0.728} & \underline{0.809} & \underline{0.868} & \underline{0.918} & \underline{0.961} & \underline{0.981} \\
 & $\mathtt{RFu}$ & \underline{0.723} & \textbf{0.819} & \textbf{0.875} & \textbf{0.922} & \textbf{0.962} & \textbf{0.981} \\
\midrule
\multirow[t]{5}{*}{KDD Cup} & $\mathtt{RND}$ & 0.531 & 0.601 & 0.700 & 0.800 & 0.901 & 0.950 \\
 & $\mathtt{LR}$ & 0.568 & 0.685 & \underline{0.785} & 0.863 & 0.933 & 0.965 \\
 & $\mathtt{SVM}$ & \underline{0.595} & \textbf{0.709} & \textbf{0.806} & \textbf{0.883} & \textbf{0.947} & \textbf{0.974} \\
 & $\mathtt{RF}$ & 0.579 & 0.674 & 0.782 & \underline{0.865} & 0.933 & 0.964 \\
 & $\mathtt{RFu}$ & \textbf{0.643} & \underline{0.705} & 0.780 & 0.857 & \underline{0.933} & \underline{0.967} \\
\midrule
\multirow[t]{5}{*}{Solar} & $\mathtt{RND}$ & 0.500 & 0.601 & 0.700 & 0.801 & 0.900 & 0.950 \\
 & $\mathtt{LR}$ & 0.642 & 0.758 & 0.833 & 0.900 & 0.949 & 0.975 \\
 & $\mathtt{SVM}$ & 0.712 & 0.808 & 0.852 & 0.907 & 0.948 & 0.975 \\
 & $\mathtt{RF}$ & \textbf{0.784} & \underline{0.832} & \textbf{0.871} & \textbf{0.919} & \textbf{0.961} & \textbf{0.978} \\
 & $\mathtt{RFu}$ & \underline{0.759} & \textbf{0.835} & \underline{0.865} & \underline{0.911} & \underline{0.960} & \underline{0.976} \\
\bottomrule
\end{tabular}

	\caption{The $F_1$ scores of all classifiers for every datasets and different settings of $p$. The best value per dataset and value of $p$ are shown in bold face, while the second best value is shown with an underline.}
	\label{tab:classifier_table}
\end{table}
After the analysis of optimal selection in the previous section we now aim to learn to select from historical data.
Specifically, we compute optimal selection (according to \cref{prop:optimal_solution}) on the validation data splits defined in \cref{sec:experimentalsetup} and use this as the label in our binary classification task.
Let $(\bxt, y_t)$ be the current window and label.
Then, we compute the optimal selection $s_t^*$ using $\fint(\bxt)$, $\fcomp(\bxt)$ and $y_t$.
Furthermore, we generate the following features to help in fitting our classifier.
$ \Delta_p(t) = \fint(\bxt) - \fcomp(\bxt)$ is the difference in prediction at time $t$.
$\Delta_e(t) =  (\fint(\bm{x}_{t-1}) - y_{t-1})^2 - (\fcomp(\bm{x}_{t-1}) - y_{t-1})^2$ is the last known difference in observed errors between the models.
Lastly, let $\Sigma(t) = (L^{-1}\sum_{i=1}^L x^{(i)}_{t}, \min(\bxt), \max(\bxt))$ be the statistics of mean, min and max values over the current window $\bm{x}_t$.
Then, we have the new datapoint $(\bm{x}'_t, s_t^*)$ that we utilize to fit the classifier with
$$
\bm{x}_t' = \bxt \oplus \Delta_p(t) \oplus \Delta_e(t) \oplus \Sigma(t)
$$
where $\oplus$ is the vector concatenation.

Next, we evaluate the performance of multiple classifier from different model families on this classification task:
\begin{itemize}
	\item \texttt{RM}: A random baseline that (given a desired $p$) generates a selection $\hat{\bm{s}} \sim \mathcal{B}(T, p)$ where $\mathcal{B}$ is the binomial distribution.
	\item \texttt{LR}: A Logistic Regression, fitted with L-BFGS
	\item \texttt{RF}: A Random Forest with $128$ estimators.
	\item \texttt{SVM}: A Support Vector Machine with an RBF kernel.
\end{itemize}

To also investigate the impact of high class imbalance on the classifiers performance we also evaluated \texttt{RFu}.
This is a ensemble of Random Forests where each ensemble member is trained using a balanced version of the dataset.
The balancing is done using upsampling (with replacement) of the minority class so that both classes appear in equal proportion.
We did not find that upsampling increased performance on either Logisitc Regression or SVMs, which is why we omit it from further discussion.

The result of the experiment can be found in \cref{tab:classifier_table}.
Specifically, for each dataset and classifier combination we evaluated the $F_1$ score for different levels $p \in \{ 0.5, 0.6, 0.7, 0.8, 0.9, 0.95 \}$.
We follow the approach outlined in \cite{formanApplesapplesCrossvalidationStudies2010} to compute one $F_1$ value per dataset and classifier by computing the true positive (TP), false positive (FP) and false negative (FN) predictions per \TS and summing them over the entire dataset.
Then, one $F_1$ score is computed using the aggregated values via
$$
F_1(\mathrm{TP}, \mathrm{FN}, \mathrm{FP}) = (2\mathrm{TP}) / (2 \mathrm{TP}  + \mathrm{FP} + \mathrm{FN}).
$$

Notice that we highlighted the best $F_1$ score per dataset and value of $p$ in bold, as well as the second highest value with an underline.
As can be seen, both versions of the Random Forest (\texttt{RF} and \texttt{RFu}) perform very well over multiple datasets and values of $p$, achieving the highest or second highest $F_1$ scores fairly consistently.
Interestingly, the superior performance of \texttt{RFu} is especially noticeable for low values of $p$ (i.e., $p=0.5$ and $p=0.6$), suggesting that the balancing of class label proportions is not so important there and the ensembling of base learners is increasing the performance instead.

Due to the highest empirical performance, especially for lower values of $p$, we decide to choose \texttt{RFu} for the next experiment.
This answers research question \textbf{Q2}.

\subsection{Comparison to state-of-the-art}
\label{sec:q3}
\begin{figure}[h]
	\centering
	\begin{tikzpicture}[
  treatment line/.style={rounded corners=1.5pt, line cap=round, shorten >=1pt},
  treatment label/.style={font=\small},
  group line/.style={ultra thick},
]

\begin{axis}[
  clip={false},
  axis x line={center},
  axis y line={none},
  axis line style={-},
  xmin={1},
  ymax={0},
  scale only axis={true},
  width={300},
  ticklabel style={anchor=south, yshift=1.3*\pgfkeysvalueof{/pgfplots/major tick length}, font=\small},
  every tick/.style={draw=black},
  major tick style={yshift=.5*\pgfkeysvalueof{/pgfplots/major tick length}},
  minor tick style={yshift=.5*\pgfkeysvalueof{/pgfplots/minor tick length}},
  title style={yshift=\baselineskip},
  xmax={12},
  ymin={-9.5},
  height={8\baselineskip},
]

\draw[treatment line] ([yshift=-2pt] axis cs:3.583309997199664, 0) |- (axis cs:2.583309997199664, -2.0)
  node[treatment label, anchor=east] {$\textbf{OMS-RoC}$};
\draw[treatment line] ([yshift=-2pt] axis cs:4.179081489778773, 0) |- (axis cs:2.583309997199664, -3.0)
  node[treatment label, anchor=east] {$\textbf{AALF}_{p=0.5}$};
\draw[treatment line] ([yshift=-2pt] axis cs:4.752030243629235, 0) |- (axis cs:2.583309997199664, -4.0)
  node[treatment label, anchor=east] {$\textbf{AALF}_{p=0.6}$};
\draw[treatment line] ([yshift=-2pt] axis cs:5.09017082049846, 0) |- (axis cs:2.583309997199664, -5.0)
  node[treatment label, anchor=east] {$\textbf{DETS}$};
\draw[treatment line] ([yshift=-2pt] axis cs:5.151498179781574, 0) |- (axis cs:2.583309997199664, -6.0)
  node[treatment label, anchor=east] {$\textbf{ADE}$};
\draw[treatment line] ([yshift=-2pt] axis cs:5.190702884346122, 0) |- (axis cs:2.583309997199664, -7.0)
  node[treatment label, anchor=east] {$\textbf{AALF}_{p=0.7}$};
\draw[treatment line] ([yshift=-2pt] axis cs:5.5879305516662, 0) |- (axis cs:12.385046205544665, -7.0)
  node[treatment label, anchor=west] {$\textbf{KNN-RoC}$};
\draw[treatment line] ([yshift=-2pt] axis cs:6.194903388406609, 0) |- (axis cs:12.385046205544665, -6.0)
  node[treatment label, anchor=west] {$\textbf{AALF}_{p=0.8}$};
\draw[treatment line] ([yshift=-2pt] axis cs:7.436012321478578, 0) |- (axis cs:12.385046205544665, -5.0)
  node[treatment label, anchor=west] {$\textbf{AALF}_{p=0.9}$};
\draw[treatment line] ([yshift=-2pt] axis cs:8.121674600952113, 0) |- (axis cs:12.385046205544665, -4.0)
  node[treatment label, anchor=west] {$\textbf{AALF}_{p=0.95}$};
\draw[treatment line] ([yshift=-2pt] axis cs:11.327639316718006, 0) |- (axis cs:12.385046205544665, -3.0)
  node[treatment label, anchor=west] {$\textbf{LastValue}$};
\draw[treatment line] ([yshift=-2pt] axis cs:11.385046205544665, 0) |- (axis cs:12.385046205544665, -2.0)
  node[treatment label, anchor=west] {$\textbf{MeanValue}$};
\draw[group line] (axis cs:4.752030243629235, -2.6666666666666665) -- (axis cs:5.151498179781574, -2.6666666666666665);

\end{axis}
\end{tikzpicture}
	\caption{Critical Difference Diagram (over RMSE error) of all evaluated model selectors, computed over all datasets. The average rank is shown above (smaller is better). Two selectors are connected with a horizontal line if they did no show significantly different performance.}
	\label{tab:baseline_cdd}
\end{figure}

\begin{figure}[h]
	\centering
	\includegraphics{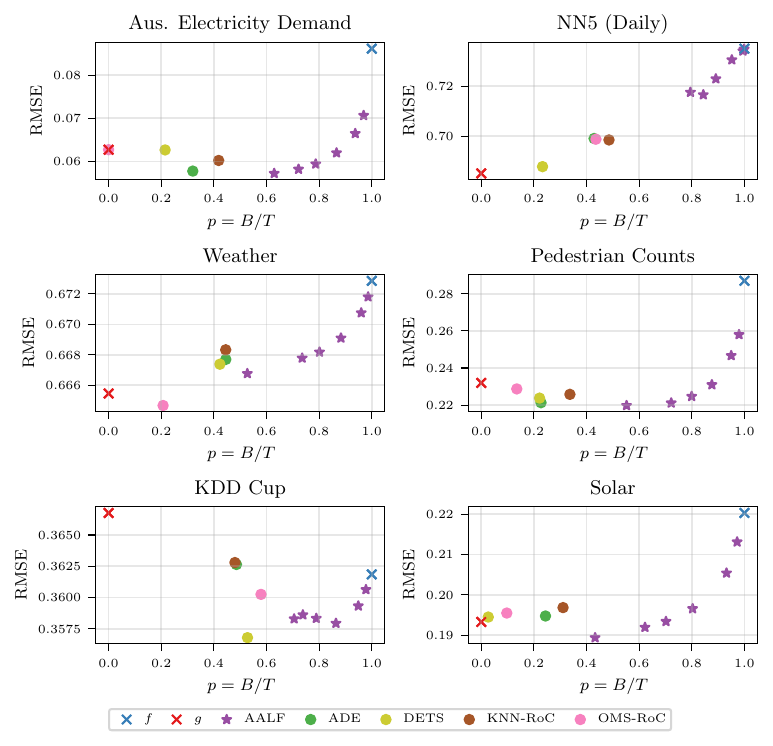}
	\caption{Comparison between average RMSE (y axis) and empirical selection of $\fint$ (x axis). \ourmethod was evaluated for multiple hyperparameter values of $p$. }
	\label{fig:scatter}
\end{figure}

Finally, we compare our framework, \ourmethod, to state-of-the-art online model selection methods, both in terms of predictive performance as well as in terms of how often $\fint$ is chosen over $\fcomp$.
\ourmethod comprises of an ensemble of Random Forest classifiers, each of which was trained on a differently sampled, balanced dataset of input features and optimal selections as outlined before.
In this experiment, we will again (as in the previous experiment) consider multiple different values of $p$, namely $p \in \{ 0.5, 0.6, 0.7, 0.8, 0.9, 0.95 \}$.

The state-of-the-art methods and baselines which we compare ourselves against are:

\begin{itemize}
	\item \textbf{ADE} \cite{cerqueiraArbitratedEnsembleTime2017} tries to learn the expected error for each model on unseen data based on measured validation error. Afterwards, a weighting scheme is proposed, assigning each model a weight which can be used for ensembling. We, however, choose the model with the highest weight at each time step for prediction.
	\item \textbf{DETS} \cite{cerqueiraDynamicHeterogeneousEnsembles2017} is an ensembling method that selects a subset of models for prediction based on recent errors. These errors are computed over a window of past predictions for each model, smoothed and aggregated into a weighting for ensembling. We again choose the model with the highest weight at each time step for prediction.
	\item \textbf{KNN-RoC} \cite{priebeDynamicModelSelection2019} utilizes the idea of Regions of Competence, where each model keeps a buffer of datapoints it performed best at during validation. At test time, the new input window is compared to all Regions of Competence and the model with the highest similarity is chosen to predict the next value.
	\item \textbf{OMS-RoC} \cite{saadallahOnlineExplainableModel2023} also utilizes Regions of Competence. However, OMS-RoC clusters the data and assigns model expertise to entire clusters, instead of to individual datapoints.
	\item \textbf{MeanValue} and \textbf{LastValue} serve as simple, yet important baselines, where the prediction is the average or last value of the window $\bm{x}_t$, respectively.
\end{itemize}

First, we will analyze the overall predictive performance.
For that, we created a Critical Difference Diagram (see \cref{tab:baseline_cdd}) over all \TS in all datasets.
Similar to the CD diagram in \cref{fig:cdd_single_models} we have the average rank of each method given and draw a horizontal bar between pairs of methods if they are not significantly different.

Unsurprisingly, the simple baselines \textbf{MeanValue} and \textbf{LastValue} are the worst performing methods.
Next, we notice that the versions of \ourmethod that have very high values of $p$ (above $p=0.8$) perform worse than the comparison methods.
The two versions $\ourmethod_{p=0.7}$ and $\ourmethod_{p=0.6}$, however, perform comparably to \textbf{DETS} and \textbf{ADE}, two popular state-of-the-art online model selection methods.
Moreover, $\ourmethod_{p=0.5}$ even significantly outperforms these two selectors, only being surpassed by \textbf{OMS-RoC}.
This experiment suggests that, depending on the amount of interpretability desired, we are able to achieve comparable performance to current state-of-the-art methods.

Next, we want to analyze the amount of selections of the model $\fint$ in tandem with the achieved performance.
For that, we computed the empirical $p = B/T$ (x axis), as well as the average RMSE (y axis) over all datasets and methods in \cref{fig:scatter}.
We show all versions of \ourmethod for all values of $p$ in violet.
As can be seen, the exponential trade-off between RMSE and $p$ seen in \cref{fig:loss_floor} can also be observed empirically for most datasets.
Moreover, we observe an overestimation of $p$ in most of the experiments, i.e., \ourmethod chooses $\fint$ more than necessary.
There is most likely more performance to be gained by tweaking the decision threshold or achieving a better fit of the \texttt{RFu} classifier.

As mentioned earlier, \textbf{OMS-RoC} significantly outperformed even our best performing selector ($\ourmethod_{p=0.5}$).
However, we want to stress that (as can be observed empirically in \cref{fig:scatter}) \textbf{OMS-RoC} selects $\fcomp$ significantly more often than \ourmethod on most datasets.
Thus, in terms of a performance-interpretability trade-off both \textbf{OMS-RoC} and \ourmethod perform on different ends of the spectrum.
These experiments answer research question \textbf{Q3}.

\subsection{Discussion}
Combining the results of the previous three experiments we are able to provide evidence that a online model selection strategy with a very high interpretability constraint could perform very well in theory (see \cref{sec:q1}).
With choosing $\fcomp$ only a fraction of the time we are able to significantly outperform $\fint$ since the performance increases exponentially with decreasing $p$ (see \cref{fig:loss_floor}).
In our experiments, we are generally able to achieve a good fit to the optimal selection with established classification methods such as Random Forests and Support Vector Machines (see \cref{sec:q2}).
Our proposed upsampling of the minority class further helps to improve the performance in some cases.
In other cases (especially for low values of $p$) the increase in performance is due to the fact that we ensembled the classifiers.
In comparison to the current state-of-the-art model selection methods (see \cref{sec:q3}) we performed competitively with a focus on being more interpretable than all other competitors.
While being interpretable is not a specific goal of these comparison methods we want to stress the fact that our multiobjective goal of high performance and good interpretability is (to the best of our knowledge) novel in online model selection for \TS forecasting.

\section{Conclusion}
In this work we presented a novel framework for online model selection with an interpretability constraint called \ourmethod: Almost Always Linear Forecasting.
It computes the optimal solution to a constraint optimization framework on held-out validation data, which is then used to train a classifier which later acts as an online model selector.
We investigated the theoretically best selection performance, as well as the empirically achieved performance on over 3500 real-world \TS, showing competitive performance against state-of-the-art online model selection methods while being more interpretable overall.

There are multiple open research directions that can be addressed in future work.
First, while we only evaluated our methodology on univariate data for simplicity there is no inherent reason why our method would not be applicable to multivariate data.
However, the question of which interpretable, multivariate \TS model to choose would need to be addressed since feature interactions in this context often lead to inherently complicated models.
Also, as we could observe from our conducted experiment there is more room for improvement in terms of fitting the optimal selection.
A better selection would lead to even smaller RMSE errors with the same (or more) interpretability.
Further, model selection strategies with guarantees of satisfying the constraint (in expectation, for example) could improve the applicability in safety-critical domains.
One possible direction might be found in reinforcement learning by analyzing past regret or by heuristically preventing the selector to break the constraint should $\fcomp$ be chosen too often in a fixed-size lookback window.

\bmhead{Acknowledgments}
This research has been funded by the Federal Ministry of Education and Research of Germany and the state of North Rhine-Westphalia as part of the Lamarr Institute for Machine Learning and Artificial Intelligence.

\bibliography{sn-bibliography}

\end{document}